\documentclass[11pt,a4paper]{article}

\usepackage[utf8]{inputenc}
\usepackage[T1]{fontenc}
\usepackage{lmodern}
\usepackage{microtype}

\usepackage[margin=1in]{geometry}

\usepackage{amsmath,amssymb}
\usepackage{amsthm}
\theoremstyle{definition}
\newtheorem{definition}{Definition}
\theoremstyle{plain}
\newtheorem{proposition}{Proposition}

\usepackage{graphicx}
\graphicspath{{figures/}}
\usepackage[font=small,labelfont=bf]{caption}
\usepackage{subcaption}

\usepackage{booktabs}
\usepackage{multirow}
\usepackage{array}
\newcolumntype{C}[1]{>{\centering\arraybackslash}p{#1}}

\usepackage[dvipsnames]{xcolor}
\usepackage[colorlinks=true,linkcolor=MidnightBlue,citecolor=OliveGreen,urlcolor=BrickRed]{hyperref}

\usepackage[numbers,sort&compress]{natbib}

\usepackage{enumitem}
\usepackage{url}
\usepackage{float}

\newcommand{\E}{\mathbb{E}}
\newcommand{\real}{\mathrm{real}}
\newcommand{\simu}{\mathrm{sim}}

\title{\textbf{Simulated Customers Never Walk Away:\\
Decision Fidelity of LLM User Simulators\\
Measured Against Real Purchase Outcomes}}

\author{
  Liang Chen
}
\date{June 2026}

\begin{document}
\maketitle

\begin{abstract}
LLM-as-user-simulation has quietly become core infrastructure for
conversational AI: agent benchmarks ($\tau$-bench), training pipelines, and a
fast-growing body of fidelity studies all rely on LLMs role-playing the human
side of the dialogue. Existing validation frameworks measure
\emph{communicative fidelity}---whether simulators talk like humans---against
ground truth produced by paid participants role-playing assigned goals. We
argue this paradigm has a structural blind spot: when the user's goal is
assigned, the user's \emph{willingness} is exogenous, so no existing framework
can test whether simulators make \emph{decisions} like real users whose
motivation is endogenous, latent, and decaying. We introduce \emph{decision
fidelity}---whether a simulated population reproduces the decision-state
dynamics of real users facing real, consequential choices---and measure it on
a unique testbed: 2{,}790 production conversations between an LLM sales agent
and real customers, including 793 with verified payment outcomes. Using a
teacher-forced probe protocol that holds context and measurement instrument
fixed, we find a systematic, outcome-correlated failure we call the
\emph{disengagement deficit}: simulators reproduce eventual buyers almost
exactly (depth bias $+0.09$) but inflate eventual non-buyers toward the
purchase frame (depth bias $+0.40$; group contrast $d{=}0.38$,
$p{<}0.001$), halving their expressed resistance (25.1\%$\to$13.5\%) and
nearly doubling deliberation (21.9\%$\to$40.1\%) while fabricating no
purchases. The deficit replicates across model families (DeepSeek simulator:
$d{=}0.41$, $p{=}0.002$) and resists the obvious fix: explicitly instructing the
simulator that it may disengage cuts \emph{marginal} bias five-fold but barely
moves the outcome-conditioned contrast ($d{=}0.34$, $p{=}0.008$), merely
relocating the error onto eventual buyers---decision fidelity is not a prompting
oversight. Real non-buyers
say ``not now'' and stop; simulated non-buyers ask about price. Evaluating or
training sales and persuasion agents against such simulators systematically
overstates funnel progress exactly where it matters most---the customers who
walk away.
\end{abstract}

\noindent\textbf{Keywords:} LLM-as-user-simulation, user simulator, decision
fidelity, agent evaluation, conversational sales, sycophancy, simulation-to-real
gap, persuasion dialogue

\section{Introduction}
\label{sec:intro}

Evaluating a conversational agent requires a counterparty. Because real humans
are expensive, slow, and hard to standardize, the field has converged on a
pragmatic substitute: \emph{LLM-as-user-simulation}, in which a language model
role-plays the human side of the dialogue. This substitution is no longer a
niche convenience. It underlies prominent agent benchmarks such as
$\tau$-bench~\cite{yao2024taubench}, supplies the interactive environments used
to train and tune dialogue agents, and has spawned a dedicated research thread
on \emph{simulator fidelity}~\cite{realusersim2026,convapparel2026,sim2real2026}.
The premise of all of this work is the same: \emph{if a simulator behaves
enough like a real user, conclusions drawn against it transfer to
reality.}

The fidelity literature has rightly focused on whether simulators
\emph{communicate} like humans. Recent benchmarks measure stylistic realism,
persona consistency, verbosity, information-disclosure patterns, and emotional
reactions, and report a persistent ``realism
gap''~\cite{convapparel2026,sim2real2026}: simulators are too verbose, too
uniformly polite, too patient, and too cooperative. We take this work as our
point of departure and ask a question it cannot answer.

\paragraph{The blind spot.} Every existing fidelity benchmark validates against
ground truth produced by \emph{paid participants role-playing an assigned
goal}---book this flight, return this item, find this product. When the goal is
assigned, the user's \emph{willingness to act is exogenous}: it is handed to the
participant as part of the task. Consequently, no existing framework can test
whether a simulator reproduces the part of human behavior that matters most in
the highest-stakes deployments: the \emph{endogenous, latent, and decaying
willingness} of a real person deciding whether to commit. A real customer can
lose interest, stall, deflect, and quietly disappear---and whether they do is
\emph{the outcome the agent exists to influence}. A role-player told to buy a
flight never genuinely decides not to.

We call the missing construct \emph{decision fidelity}: whether a simulated
population reproduces the \emph{decision-state dynamics}---how willingness,
resistance, deliberation, and disengagement evolve under persuasion---of real
users facing real, consequential choices. Communicative fidelity asks whether
the simulator \emph{sounds} human; decision fidelity asks whether it
\emph{decides} like one. For evaluating sales, fundraising, negotiation, and
other persuasion agents, decision fidelity is the property that actually
determines whether simulator-based conclusions transfer---and it is precisely
the property no prior benchmark can measure, because no prior benchmark has
real outcomes.

\paragraph{A testbed with real stakes.} We measure decision fidelity using a
resource the fidelity literature lacks: \emph{ZhenaiSales}, 2{,}790 production
conversations between a deployed LLM sales agent and real customers of a Chinese
relationship-matchmaking service, of which 793 end in a \emph{verified payment}
and the remainder do not. These are not role-players; their willingness is real
and their decisions cost real money. This lets us pose decision fidelity as a
measurement problem with ground truth---and, crucially, sidesteps the
constraint that doomed our earlier attempts to make causal claims from
observational dialogue data: fidelity is a question about \emph{distributional
match}, for which observational data with outcomes is sufficient.

\paragraph{Method.} We compare a simulator to reality with a teacher-forced
probe protocol that controls the two confounds that would otherwise contaminate
the comparison. (i) \emph{Context is held fixed}: at a probe point we feed the
simulator the \emph{real} conversation prefix and ask only for the next user
turn, so the simulator and the real user respond to identical histories.
(ii) \emph{The measurement instrument is held fixed}: an LLM perceiver maps both
the real and the simulated turn to the same decision-state schema (engagement
stage, emotion, blocker), so differences reflect behavior, not labeling. We
then compare real and simulated decision-state distributions, both marginally
and---this is the key move---conditioned on the user's \emph{eventual real
outcome}.

\paragraph{Finding: the disengagement deficit.} Simulators reproduce eventual
\emph{buyers} almost exactly (mean engagement-depth bias $+0.09$) but
systematically inflate eventual \emph{non-buyers} toward the purchase frame
(bias $+0.40$). The gap is significant and outcome-specific (group contrast
$d{=}0.38$, permutation $p{<}0.001$, $n{=}374$ conversations): a simulator's
infidelity is concentrated entirely on the customers who, in reality, do not
buy. Mechanistically, the simulator \emph{halves} the resistance real
non-buyers express (25.1\%$\to$13.5\%) and \emph{nearly doubles} their
deliberation (21.9\%$\to$40.1\%), while fabricating essentially no purchases
(``deciding'' stage unchanged). It does not invent fake sales; it inflates the
\emph{middle of the funnel}, manufacturing a population of interested
deliberators where reality contains people who say ``I'm busy,'' ``no thanks,''
and then go silent. We name this the \emph{disengagement deficit}: simulated
customers never walk away.

\paragraph{Generality.} The deficit is not an artifact of one model, one prompt,
or one judge. It replicates when the simulator is swapped to a different model
family (DeepSeek-V4-Flash: $d{=}0.41$, $p{=}0.002$) and strengthens when the
decision-state instrument is swapped to a different model family
($d{=}0.42$, $p{=}0.015$), ruling out a judge artifact. Most tellingly, explicitly instructing
the simulator that it may be disinterested, perfunctory, or impatient cuts its
\emph{marginal} bias five-fold yet leaves the outcome-conditioned contrast
almost untouched ($d{=}0.34$, $p{=}0.008$)---it learns to perform disengagement
but not to direct it at the right people, over-correcting buyers instead. This
points to a paradigm-level cause rather than a fixable prompt: instruction
tuning rewards being a helpful, engaged interlocutor, whereas real
willingness-decay---deflection, stalling, disappearance---is largely absent from
alignment data.

\paragraph{Why it matters.} An agent evaluated against a simulator that cannot
disengage is graded on an easy population: it rarely meets hard refusal, and it
can never be tested on the skill that most determines real sales
performance---recognizing and responding to a customer who is leaving. We show
(\S\ref{sec:consequence}) that this biases the apparent value of sales
tactics: after a pitch, simulated customers deliberate 75\% of the time versus
45\% for real ones, so pressure and pitching look far more effective against
simulated customers, who deliberate rather than disengage, than against real
ones. Any pipeline that
evaluates---or worse, reinforcement-trains---a persuasion agent against such
simulators optimizes a reward landscape that is wrong precisely where the money
is.

\paragraph{Contributions.}
\begin{enumerate}
  \item We articulate \textbf{decision fidelity} as a distinct, measurable
  property of user simulators, and argue it---not communicative
  fidelity---governs whether simulator-based evaluation of persuasion agents
  transfers to reality (\S\ref{sec:formulation}).
  \item We introduce a \textbf{teacher-forced, instrument-controlled protocol}
  for measuring decision fidelity against real consequential outcomes, built on
  \textbf{ZhenaiSales}, the first user-simulation testbed grounded in verified
  purchase decisions; we release the protocol, instruments, and derived
  statistics, with anonymized conversation data subject to privacy review
  (\S\ref{sec:method},~\S\ref{sec:experiments}).
  \item We document the \textbf{disengagement deficit}---a systematic,
  outcome-correlated, cross-model failure of LLM user simulators to reproduce
  real willingness-decay---and trace its mechanism and its consequences for
  agent evaluation and training (\S\ref{sec:experiments},~\S\ref{sec:analysis}).
\end{enumerate}

\section{Related Work}
\label{sec:related}

\subsection{LLM User Simulation for Agent Evaluation and Training}
Simulating the user has long been a route to scalable evaluation and training
of dialogue systems~\cite{schatzmann2007,li2016user}. LLMs turned earlier
rule-based simulators into fluent generative ones used across search, recommendation,
task-oriented dialogue, and tool agents~\cite{realusersim2026,sim2real2026}.
The paradigm is now load-bearing: $\tau$-bench~\cite{yao2024taubench} evaluates
tool agents through dynamic conversations with an LLM-simulated user, and its
\texttt{pass\textasciicircum k} reliability metric---and the leaderboards built
on it---inherit whatever biases the simulator carries. Inspection of the
$\tau$-bench user simulator is instructive: the user is driven by an assigned
\emph{instruction}, told to reveal information gradually and not to hallucinate,
and emits a \texttt{\#\#\#STOP\#\#\#} token \emph{only when the instruction goal
is satisfied}. The simulated user therefore cannot decline the goal; willingness
is a fixed input, not a behavior. Our work asks what is lost by that assumption.

\subsection{Measuring Simulator Fidelity}
A 2026 wave of work measures how far simulators diverge from humans.
\emph{RealUserSim}~\cite{realusersim2026} grounds simulators in 7{,}275 WildChat
personas and scores fidelity along five communicative dimensions (persona/affect,
linguistic style, competency, information flow, pacing) via a paired-trajectory
Turing test; it explicitly studies ``how users communicate rather than what they
\emph{decide}.'' RealUserSim also coins the ``Formalism Ceiling'' (6--8\% baseline
style-match) and identifies ``Directive Amplification''---identical behavioral directives
produce dramatically different responses across models (roleplay markers 0--19.4\%),
making cross-model fidelity comparisons unreliable.
Google's \emph{ConvApparel}~\cite{convapparel2026} collects
4{,}146 human--AI apparel-shopping conversations with both ``good'' and ``bad''
recommenders and first-person satisfaction/frustration annotations, and validates
simulators by population-level statistical alignment, a learned human-likeness
discriminator, and counterfactual generalization; notably its prompted baseline
\emph{instructs} the simulator to ``quit if overly annoyed,'' yet a realism gap
persists. \emph{Mind the Sim2Real Gap}~\cite{sim2real2026} runs 31 simulators on
$\tau$-bench against 451 human role-players and finds simulators are
``excessively cooperative, stylistically uniform, and lack realistic frustration,''
creating an ``easy mode'' that inflates agent success rates.
Independently, \emph{Lost in Simulation}~\cite{seshadri2026lost} finds systematic
miscalibration (ECE\,=\,15.1) with task-difficulty-dependent inflation, corroborating the
cooperative-bias findings of the Sim2Real work from a calibration perspective.

These studies establish that simulators are too agreeable---but they diagnose it
as a \emph{communicative/affective} gap measured against role-players. Two
structural limits follow. First, their ground truth is paid participants
executing assigned goals (ConvApparel pays contractors; Sim2Real uses role-play;
RealUserSim uses chit-chat personas), so the participant's \emph{willingness} is
exogenous and a simulator can match every communicative statistic while getting
the \emph{decision} wrong. Second, ``cooperativeness'' is treated as a stylistic
trait, not as an \emph{outcome-conditioned} distortion. We complement this
literature by (i) defining fidelity over decision states rather than
communication, (ii) grounding it in \emph{verified purchase outcomes} rather than
role-play, and (iii) showing the distortion is not uniform but concentrated on
the eventual non-buyers---a structure invisible to outcome-free benchmarks.

\subsection{Cooperativeness, Sycophancy, and Willingness}
LLMs are known to be sycophantic and over-agreeable~\cite{sharma2024sycophancy},
and user simulators inherit this~\cite{sim2real2026}. Work on non-collaborative
and willingness-aware dialogue explicitly models users who resist or may not
want to engage~\cite{hentona2025userwillingness,dutt-etal-2021-resper}.
Taubenfeld et al.~\cite{taubenfeld2026stated} evaluate 25 LLMs and find that
self-reported disposition scores fail to predict revealed behavioral rankings,
implying that persona-calibrated simulators may not produce behaviorally
faithful simulation even when their declared preferences match the target user.
Our finding sharpens the sycophancy account: the deficit is not that simulators are globally
too nice, but that they cannot represent \emph{declining} willingness in the
specific people whose willingness, in reality, declines---a failure with sign,
location, and consequences, not just a politeness offset.

\subsection{LLM-as-Judge and Using LLM Annotations for Inference}
Our measurement instrument is an LLM perceiver, situating us alongside
LLM-as-judge work and its known biases~\cite{zheng2023judging,llm-judge-survey},
and alongside methods for valid downstream inference from imperfect LLM
annotations~\cite{egami2023dsl}. We control for instrument bias by (a) holding
the instrument fixed across the real/simulated comparison so labeling error
cancels in the contrast, and (b) swapping the instrument to a second model family
as a robustness check (\S\ref{sec:experiments}).
Hullman et al.~\cite{hullman2026validity} provide a formal statistical framework
showing that even at 90\% LLM prediction accuracy, correlated errors can bias
downstream estimates by ${\sim}$30\%, further motivating our instrument-swap design.
Schessl~\cite{schessl2026autocorrelation}
warns that turn-level LLM conversation findings can be inflated by autocorrelation;
our primary endpoint is computed at the \emph{conversation} level precisely to
avoid this.

\subsection{AI for Sales and Persuasion}
Persuasion and sales dialogue is an active
area~\cite{wang-etal-2019-persuasion,zhang2025aisalesman,nandakishor2025salesrlagent},
and recent evidence questions how much per-turn \emph{strategy} explains of
outcomes~\cite{petrova2026persuasion}. This motivates our consequence analysis:
if a simulator suppresses resistance after sales pressure, it will reward
pressure tactics that real customers ignore or punish, corrupting exactly the
signal a strategy-optimizing agent learns from.

\section{Decision Fidelity: Problem Formulation}
\label{sec:formulation}

\subsection{Setup}
A persuasion dialogue is a sequence of alternating turns
$c = (a_1, u_1, a_2, u_2, \dots)$ between an agent ($a_t$) and a user ($u_t$),
ending in a binary, consequential outcome $y \in \{0,1\}$ (here, payment). A
\emph{user simulator} $S$ is a conditional generator that, given a profile $\pi$
and a history $h_{t} = (a_1,u_1,\dots,a_t)$, produces the next user turn
$\hat u_t \sim S(\cdot \mid \pi, h_t)$. A \emph{decision-state instrument}
$\Phi$ maps a user turn in context to a latent state
$\Phi(u_t, h_t) = s_t \in \mathcal{S}$ capturing where the user sits on the path
to commitment (e.g., exploring, engaging, considering, deciding, resisting).

\subsection{Communicative vs.\ decision fidelity}
Prior fidelity work evaluates $S$ by how closely the \emph{surface form} of
$\hat u_t$ matches human turns---style, length, politeness, dialog
acts~\cite{realusersim2026,convapparel2026,sim2real2026}. Formally this asks
whether the distribution of some communicative featurization $g(\hat u_t)$
matches $g(u_t)$. We instead ask whether $S$ reproduces the \emph{decision
dynamics} a real population exhibits, i.e.\ whether the induced distribution over
decision states matches reality.

\begin{definition}[Decision fidelity]
A simulator $S$ is \emph{decision-faithful} with respect to instrument $\Phi$ and
a real population $\mathcal{D}$ if, for histories $h$ drawn from $\mathcal{D}$,
the simulated and real next-state distributions agree:
$\Phi(\hat u \mid h) \stackrel{d}{=} \Phi(u \mid h)$, where $\hat u \sim S(\cdot
\mid \pi, h)$ and $u$ is the real continuation. \emph{Conditional} decision
fidelity additionally requires agreement within each outcome stratum:
$\Phi(\hat u \mid h, y) \stackrel{d}{=} \Phi(u \mid h, y)$ for $y \in \{0,1\}$.
\end{definition}

Conditional fidelity is the operative requirement. A simulator can match the
\emph{marginal} state distribution---looking realistic on average---while
systematically misassigning states \emph{within} the buyer and non-buyer strata
in compensating directions. Because an agent's job is precisely to move
different users differently, marginal realism is insufficient; the simulator
must be faithful where the populations diverge.

\subsection{A scalar endpoint: outcome-conditioned depth bias}
\label{sec:endpoint}
To obtain a powered, pre-registerable test we reduce $\mathcal{S}$ to an ordinal
\emph{engagement depth} $\delta:\mathcal{S}\to\mathbb{Z}$ measuring proximity to
purchase, with resisting$<$exploring$<$engaging$<$considering$<$deciding (we use
$\{$resisting$=0$, exploring$=1$, engaging$=2$, considering$=3$, deciding$=4\}$;
robustness to alternative codings is reported in
Appendix~\ref{app:robustness}). For a probe at
history $h$ with real continuation $u$ and simulated continuation $\hat u$, the
\emph{depth bias} is
\[
  D(h) \;=\; \delta\big(\Phi(\hat u, h)\big) - \delta\big(\Phi(u, h)\big).
\]
We aggregate $D$ to the conversation level (mean over a conversation's probes) to
avoid within-conversation autocorrelation~\cite{schessl2026autocorrelation}, and
define the \textbf{primary endpoint} as the contrast between outcome strata,
\[
  \Delta \;=\; \E_{y=0}\!\big[\bar D\big] \;-\; \E_{y=1}\!\big[\bar D\big].
\]
Perfect conditional decision fidelity implies $\E_{y=0}[\bar D] =
\E_{y=1}[\bar D] = 0$, hence $\Delta = 0$. A positive $\Delta$ means the
simulator inflates engagement depth \emph{more} for eventual non-buyers than for
buyers---the signature of a population that cannot disengage. We test $\Delta>0$
by a conversation-level permutation test and report Cohen's $d$.

\begin{proposition}[Invariance properties of the endpoint]
\label{prop:invariance}
Under the teacher-forced protocol the real and simulated turns share an identical
history $h$ and are scored by the same instrument $\Phi$. Therefore (i) any
additive, history-dependent labeling bias $b(h)$ of the instrument cancels
within each probe in $D(h)$, and (ii) any outcome-\emph{independent} shift of
the simulator (e.g.\ a generic style or stance offset) contributes equally to
both strata and cancels in $\Delta$. Only \emph{outcome-correlated} differences
survive. (Sketch: $\Delta$ differences out $b(h)$ within probes, then
differences across strata.)
\end{proposition}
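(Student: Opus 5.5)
The proof is a direct linear-algebra argument on the definition of $D(h)$ and $\Delta$ from \S\ref{sec:endpoint}. We first make the two assumptions precise. For (i), model the instrument as the true depth plus a bias that depends only on the shared history:
\[
\delta\big(\Phi(v,h)\big) = \delta^\ast(v,h) + b(h) + \varepsilon(v,h)
\]
for any user turn $v$. Here $\delta^\ast$ is the ideal depth, $b(h)$ is the additive history-dependent labeling bias, and $\varepsilon$ is residual error, which we treat separately. For (ii), decompose the simulator's true-depth deviation at a probe as
\[
\delta^\ast(\hat u,h)-\delta^\ast(u,h) = c + r(h,y),
\]
where $c$ is an outcome-independent offset (the generic style or stance shift) and $r$ carries everything else.

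\textbf{Step 1 (within-probe cancellation).} Substitute the instrument model into $D(h)$. Because the teacher-forced protocol feeds the real and the simulated turn the same $h$, the bias appears as $b(h)-b(h)=0$. This leaves $D(h)=\delta^\ast(\hat u,h)-\delta^\ast(u,h)+\varepsilon(\hat u,h)-\varepsilon(u,h)$, which proves (i).

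\textbf{Step 2 (conversation aggregation).} Averaging over a conversation's probes is linear, so $\bar D = c + \bar r + \bar\varepsilon_{\mathrm{diff}}$, and $b$ stays absent.

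\textbf{Step 3 (across-strata cancellation).} Take $\E_{y=0}$ and $\E_{y=1}$ of $\bar D$ and subtract. The constant $c$ contributes $c-c=0$, which proves (ii). What remains is
\[
\Delta=\E_{y=0}[\bar r]-\E_{y=1}[\bar r]+\big(\E_{y=0}[\bar\varepsilon_{\mathrm{diff}}]-\E_{y=1}[\bar\varepsilon_{\mathrm{diff}}]\big).
\]
This is nonzero only when $r$ or the differential instrument error depends on $y$, which is the statement that ``only outcome-correlated differences survive.''

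I would also note two extensions that follow from the same linearity. The offset $c$ may depend on $h$ as $c(h)$, provided $\E[c(h)\mid y]$ does not depend on $y$. Likewise, $b$ may depend on $y$ through $h$, since it cancels probe by probe before any conditioning.

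The main obstacle is not the algebra but the scope of the assumptions. $\delta$ is an ordinal coding of a categorical label, so an ``additive bias'' must be read on the $\delta$ scale. A bias that depends on the turn content, $b(v,h)$, for example a judge that reads simulated prose as more engaged, does \emph{not} cancel. It lands in $\varepsilon$, and it would survive in $\Delta$ only if it correlates with $y$. I would state this limitation explicitly and point to the instrument-swap robustness check as the empirical guard against it, rather than claim the proposition excludes it. The clipping of $\delta$ to $\{0,\dots,4\}$ also breaks exact additivity near the boundaries. I would handle this by stating the result for the latent unclipped score, or by noting that a constant shift interacting with clipping is itself outcome-correlated through the differing base-rate state distributions of the two strata. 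That caveat is worth flagging given the paper's buyer/non-buyer asymmetry.
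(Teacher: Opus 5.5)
Your proposal is correct and follows essentially the same route as the paper's proof. An additive $b(h)$ cancels inside each probe because both branches share $h$ and $\Phi$, and writing each stratum's expected bias as a common shift plus an outcome-specific residual makes the shift cancel in $\Delta$ by linearity. Your extra bookkeeping is the noise term $\varepsilon(v,h)$, the content-dependent $b(v,h)$ (which matches the paper's ``residual channel'' remark), and the clipping caveat (which the paper does not raise); none of it changes the argument. One small precision: in your $c(h)$ extension, the condition should be on the conversation-level mean $\bar c$ given $y$, since $\Delta$ weights conversations equally rather than probes.
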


Proposition~\ref{prop:invariance} is why $\Delta$, rather than a raw agreement
rate, is our headline: it is robust by construction to the two confounds---judge
miscalibration on contexts and generic simulator style---that the prior
literature treats as the whole story. One channel is \emph{not} covered by the
cancellation argument: a judge bias triggered by \emph{message style} that
correlates with outcome through real users' styles (e.g.\ over-reading curt real
messages as resistance). We address this residual channel empirically, by
swapping the instrument to a different model family and observing the same
contrast (\S\ref{sec:experiments}), and by inspecting probes qualitatively
(Table~\ref{tab:examples}).

\section{Method: Teacher-Forced Decision-Fidelity Probing}
\label{sec:method}

Our protocol (Figure~\ref{fig:protocol}) measures $\Delta$
(\S\ref{sec:endpoint}) while controlling the two confounds that
Proposition~\ref{prop:invariance} requires us to neutralize: divergent context
and divergent measurement.

\begin{figure}[t]
\centering
\includegraphics[width=\linewidth]{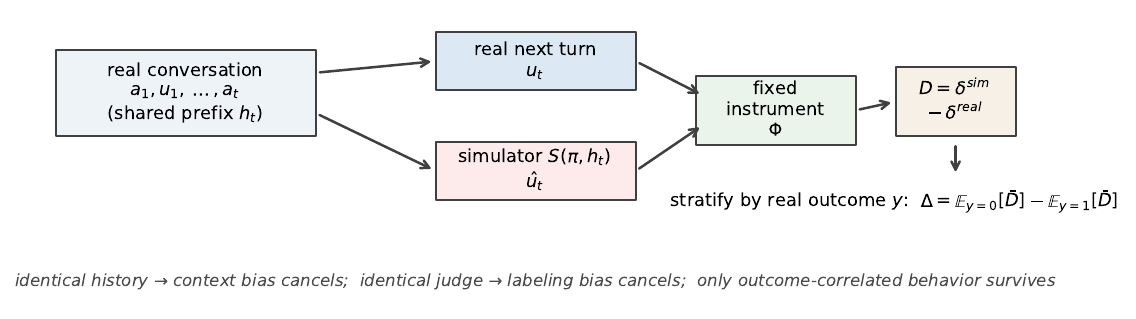}
\caption{The teacher-forced decision-fidelity protocol. At each probe, the
real user and the simulator continue the \emph{identical} real prefix $h_t$,
and both continuations are scored by the \emph{same} instrument $\Phi$; the
paired depth bias $D$ is aggregated per conversation and contrasted across
real-outcome strata. Context-dependent and judge-dependent biases cancel by
construction (Prop.~\ref{prop:invariance}); only outcome-correlated behavioral
differences survive in $\Delta$.}
\label{fig:protocol}
\end{figure}

\subsection{Teacher-forced probes}
For each real conversation $c$ with outcome $y$, we select probe turns at
$\{30\%, 60\%, 90\%\}$ of its user turns (skipping the opening greeting). At a
probe over user turn $u_t$ with real prefix $h_t$:
\begin{enumerate}
  \item \textbf{Real branch.} Score the real turn: $s^\real = \Phi(u_t, h_t)$.
  \item \textbf{Sim branch.} Give the simulator the \emph{same} real prefix
  $h_t$ plus the user's profile $\pi$ and ask for the next user turn:
  $\hat u_t \sim S(\cdot\mid\pi, h_t)$; score it under the same instrument:
  $s^\simu = \Phi(\hat u_t, h_t)$.
\end{enumerate}
Because both branches condition on the \emph{identical} real history, the
comparison isolates the user model from the agent and from any cumulative
divergence a free-running rollout would introduce. Teacher forcing is the
standard device for fidelity probing~\cite{convapparel2026}; we use it to obtain
paired $(s^\real, s^\simu)$ at matched histories, the unit of
Proposition~\ref{prop:invariance}.

\subsection{The decision-state instrument $\Phi$}
$\Phi$ is an LLM perceiver prompted to map a user turn, in its local context, to
a structured state: an \emph{engagement stage}
(exploring/engaging/considering/deciding/resisting), an \emph{emotion}
(positive/neutral/hesitant/negative), and a \emph{blocker} (none/price/trust/
capability/timing/external/prior-failure). The prompt enforces constrained JSON
output; it is reproduced in Appendix~\ref{app:prompt}. The instrument is applied
\emph{causally}---scoring turn $t$ from context up to $t$ only---so that scores
do not leak future information; we verified separately that causal vs.\ full-context
labeling does not produce outcome-correlated differences, ruling out hindsight
leakage in $\Phi$ itself.\footnote{In a format-controlled check, varying only the
instrument's visible window (prefix vs.\ full conversation) yielded no
outcome-correlated label shift ($d{=}{-}0.04$, $p{=}0.62$), so $\Phi$ does not
itself encode the outcome.} The same fixed $\Phi$ scores both branches, so its
biases cancel in $\Delta$.

\subsection{The simulator $S$}
Our primary simulator is a profile-conditioned prompted parent: it receives a
natural-language rendering of the user's real demographic profile (child's age,
education, occupation, income; parent's location; housing/vehicle status) and is
asked to continue the conversation as that parent, in colloquial, brief,
chat-style Chinese. This is the standard, competitive ``prompted persona''
simulator of the fidelity
literature~\cite{convapparel2026,realusersim2026}. To probe generality we vary
$S$ along three axes (\S\ref{sec:experiments}): the backbone model (Claude vs.\
DeepSeek-V4-Flash, spanning Western and Chinese model families); and an
\emph{instructed} variant whose prompt explicitly licenses the full range of real
behavior---disinterest, perfunctory replies, impatience, topic avoidance,
non-response---mirroring ConvApparel's ``quit if annoyed'' baseline and testing
whether the deficit is a mere prompt-engineering oversight.

\subsection{Estimation}
We compute conversation-level $\bar D$, the stratum means $\E_{y}[\bar D]$, the
contrast $\Delta$, Cohen's $d$, and a one-sided conversation-level permutation
test for $\Delta>0$ (20{,}000 permutations). We also report the full real-vs-sim
stage distributions within each stratum to expose the mechanism, and an
action-conditioned analysis (\S\ref{sec:experiments}) that labels the agent move
preceding each probe to test how the deficit distorts the agent's apparent reward
for specific tactics.

\section{Experiments}
\label{sec:experiments}

\subsection{ZhenaiSales: a testbed with real outcomes}
\label{sec:data}
ZhenaiSales comprises 2{,}790 production conversations between a deployed LLM
sales agent (``red\-/matchmaker manager'') and real parent customers of a Chinese
relationship-matchmaking platform, extracted under a fixed data
specification.\footnote{Personalized agent--user text messages; paid$=$verified
payment order (action code 3); 20 internal test users excluded.} Of these, 793
are \emph{converted} (the parent made a verified payment) and 1{,}997 are
non-converted; 98\% carry a structured demographic profile. Conversations have a
median of 16 turns (6--80). Critically, converted conversations are
\textbf{truncated at the first payment timestamp}, removing post-purchase service
chat that would otherwise leak the outcome (mean 19 turns dropped per buyer);
this makes pre-decision dynamics the only signal. Unlike role-play corpora, every
label here reflects a real person spending---or declining to spend---real money.
For the fidelity experiments we draw outcome-balanced samples and reuse the real
decision-state labels across simulator conditions.

\subsection{Primary result: the disengagement deficit}
\label{sec:primary}
We probe $n{=}374$ conversations (181 non-converted, 193 converted; 1{,}109
matched probes) with the primary prompted-persona simulator (Claude backbone) and
the fixed instrument $\Phi$. Table~\ref{tab:primary} and
Figure~\ref{fig:deficit} report the endpoint.

\begin{figure}[t]
\centering
\includegraphics[width=\linewidth]{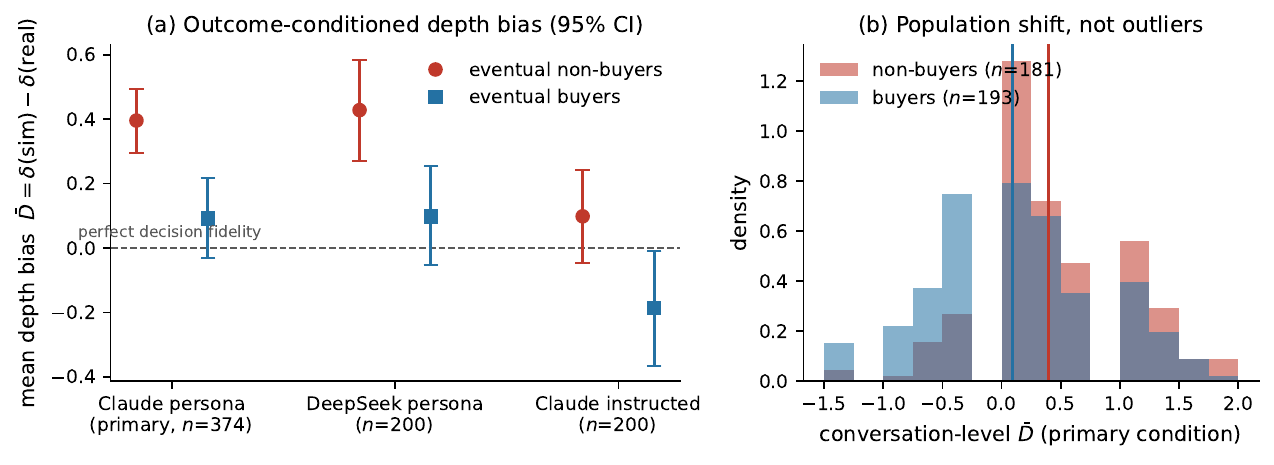}
\caption{The disengagement deficit. \textbf{(a)} Conversation-level mean depth
bias by real-outcome stratum with bootstrap 95\% CIs. Both persona simulators
track eventual buyers ($\approx$0) while inflating eventual non-buyers
($\approx{+}0.4$). Explicit disengagement instructions (right) repair the
non-buyer stratum but push buyers \emph{negative}---the error is relocated, not
removed, and the stratum gap (the deficit) persists. \textbf{(b)} The
distribution of conversation-level bias under the primary simulator: a
population-wide shift of non-buyers toward inflated engagement, not a tail of
outliers.}
\label{fig:deficit}
\end{figure}

\begin{table}[t]
\centering
\caption{Primary endpoint: outcome-conditioned engagement-depth bias
$\bar D = \delta(\mathrm{sim})-\delta(\mathrm{real})$, aggregated per
conversation. $\Delta$ is the non-buyer$-$buyer contrast; perfect decision
fidelity implies all entries $=0$. Permutation test is one-sided for
$\Delta>0$. Brackets show bootstrap 95\% CIs.}
\label{tab:primary}
\newcommand{\ci}[1]{{\scriptsize[#1]}}
\begin{tabular}{lcccc}
\toprule
Simulator & $\E_{y=0}[\bar D]$ & $\E_{y=1}[\bar D]$ & $\Delta$ & $d$ ($p$) \\
 & (non-buyers) & (buyers) & & \\
\midrule
Claude persona & $+0.396$ & $+0.092$ & $+0.304$ & $0.38$ ($0.0002$) \\
\quad ($n{=}374$) & \ci{0.30, 0.49} & \ci{$-$0.04, 0.22} & \ci{0.14, 0.46} & \ci{0.18, 0.59} \\[3pt]
DeepSeek persona & $+0.428$ & $+0.098$ & $+0.330$ & $0.41$ ($0.002$) \\
\quad ($n{=}200$) & \ci{0.27, 0.59} & \ci{$-$0.06, 0.26} & \ci{0.11, 0.56} & \ci{0.12, 0.71} \\[3pt]
Claude, instructed & $+0.098$ & $-0.187$ & $+0.285$ & $0.34$ ($0.008$) \\
\quad ($n{=}200$) & \ci{$-$0.05, 0.24} & \ci{$-$0.36, $-$0.01} & \ci{0.06, 0.51} & \ci{0.06, 0.63} \\
\bottomrule
\end{tabular}
\end{table}

Two facts stand out. First, the simulator reproduces eventual \emph{buyers}
nearly perfectly ($\bar D = +0.09$, a tenth of a stage), so it is not globally
biased. Second, it inflates eventual \emph{non-buyers} four-fold more
($\bar D = +0.40$), yielding a significant, outcome-specific contrast
($\Delta=+0.30$, $d=0.38$, $p<0.001$). The infidelity lives entirely in the
population the simulator gets wrong: the people who, in reality, do not buy.

\subsection{Mechanism: suppressed resistance, inflated deliberation}
\label{sec:mechanism}
Table~\ref{tab:mechanism} decomposes the bias into stage shifts. For non-buyers
the simulator \emph{halves} expressed resistance (25.1\%$\to$13.5\%) and
\emph{nearly doubles} deliberation (21.9\%$\to$40.1\%), while leaving the
``deciding'' (imminent-purchase) rate essentially unchanged (4.3\%$\to$4.3\%).
The simulator does not fabricate purchases; it manufactures \emph{interested
deliberators} out of people who, in reality, withdraw. Buyers show only minor
shifts. Qualitatively (Table~\ref{tab:examples}): where a real non-buyer says
``busy,'' ``no thanks,'' or sends a brand-mismatched link and goes quiet, the
simulator answers ``how much is it?'' or ``let me think it over, the child's
matter can't really be delayed.''

\begin{table}[t]
\centering
\caption{Stage-distribution shift (real$\to$sim) by outcome stratum, primary
simulator. The deficit is the non-buyer rows: resistance suppressed,
deliberation inflated, purchase rate unchanged.}
\label{tab:mechanism}
\begin{tabular}{llccc}
\toprule
Stratum & Stage & real & sim & $\Delta$pp \\
\midrule
\multirow{3}{*}{Non-buyers} & resisting   & 25.1\% & 13.5\% & $-11.6$ \\
                            & considering & 21.9\% & 40.1\% & $+18.2$ \\
                            & deciding    &  4.3\% &  4.3\% & $\phantom{+}0.0$ \\
\midrule
\multirow{2}{*}{Buyers}     & resisting   & 16.0\% & 13.6\% & $-2.4$ \\
                            & considering & 33.7\% & 46.6\% & $+12.9$ \\
\bottomrule
\end{tabular}
\end{table}

\begin{table}[t]
\centering
\caption{Representative non-buyer probes (translated). Real parents disengage;
simulated parents stay inside the purchase frame.}
\label{tab:examples}
\small
\begin{tabular}{p{0.30\linewidth} p{0.30\linewidth} p{0.30\linewidth}}
\toprule
Real (stage) & Simulated (stage) & Reading \\
\midrule
``Busy'' (resisting) & ``How much?'' (considering) & deflection $\to$ interest \\
``No need'' (resisting) & ``Tell me about it'' (engaging) & refusal $\to$ engagement \\
``Headache'' (resisting) & ``Then I should look into it'' (considering) & exit $\to$ deliberation \\
``Thanks teacher'' (considering) & ``A bit pricey, can it be cheaper?'' (considering) & polite close $\to$ haggling \\
\bottomrule
\end{tabular}
\end{table}

\subsection{Generality of the deficit}

\paragraph{Across model families (simulator swap).} Replacing the simulator
backbone with DeepSeek-V4-Flash---a different family, training pipeline, and
alignment recipe---while holding the instrument fixed reproduces the deficit at
equal strength ($\Delta=+0.330$, $d=0.41$, $p=0.002$; Table~\ref{tab:primary}),
with the same mechanism (resistance $-9.4$pp, deliberation $+16.8$pp for
non-buyers). The deficit is therefore not a single model's quirk.

\paragraph{Against explicit disengagement instructions.} A practitioner's first
fix is to \emph{tell} the simulator it may be uninterested. The instructed
variant (\S\ref{sec:method}) does exactly this, and the result is the most
informative of the paper (Table~\ref{tab:primary}, row 3). The instruction
\emph{works on the marginal}: the overall depth bias collapses from $+0.24$ to
$-0.04$, and the suppression of non-buyer resistance is almost eliminated
($-11.6$pp $\to$ $-1.7$pp). A benchmark measuring only population-level realism
would now certify the simulator as fixed. Yet the \emph{conditional} contrast
barely moves: $\Delta = +0.285$ ($d=0.34$, $p=0.008$), versus $+0.304$ before.
The mechanism explains why: the instruction makes the simulator more resistant
\emph{uniformly}---it now \emph{over}-resists eventual buyers ($\E_{y=1}[\bar
D]=-0.187$, reading them as less committed than they truly were) while
approximately fixing non-buyers. The simulator has learned to \emph{perform}
disengagement but still cannot tell \emph{who} should disengage; the instruction
relocates the error from non-buyers to buyers rather than removing it. Decision
fidelity is thus not a prompting oversight: a one-line behavioral license cuts
marginal bias five-fold while leaving the outcome-conditioned deficit
essentially intact---a direct demonstration that marginal realism is the wrong
target (\S\ref{sec:analysis}).

\paragraph{Across judges (instrument swap).} To rule out that the deficit is an
artifact of the Claude-family instrument, we re-score the real and simulated
turns with a DeepSeek-V4-Flash instrument on a 150-conversation subset (the
simulator remains Claude; only $\Phi$ changes). The deficit not only persists but
\emph{strengthens}: $\Delta=+0.351$ ($d=0.42$, $p=0.015$), with
$\E_{y=0}[\bar D]=+0.556$ vs.\ $\E_{y=1}[\bar D]=+0.204$. A judge from a different
model family, applying the schema independently, sees the same outcome-conditioned
inflation---consistent with Proposition~\ref{prop:invariance} and inconsistent
with a judge artifact.

\subsection{Consequence: the simulator misprices sales tactics}
\label{sec:consequence}
If a simulator under-resists specifically after sales pressure, an agent
evaluated against it is rewarded for tactics real customers ignore. We label (with the DeepSeek
instrument) the agent action immediately preceding each of 595 probes
(rapport/probe/pitch/handle-objection/push-close/reassure) and compare the
real-vs-sim user response by preceding action (Table~\ref{tab:action}). The
distortion is concentrated exactly on the pressure tactics. After a
\textbf{pitch}, real users deliberate 45.3\% of the time but simulated users
75.2\% ($+29.9$pp, Fisher $p<0.001$); after a \textbf{push-close}, simulated
resistance is suppressed ($18.9\%\to10.7\%$, $-8.2$pp, $p=0.046$). By contrast,
after a neutral \textbf{probe} the deliberation shift is negligible
($3.5\%\to4.7\%$, $p=1.0$). An agent
optimized against this simulator therefore observes pitching and pushing
eliciting interest where, in reality, they elicit resistance or no movement: the
simulator \emph{over-rewards precisely the tactics that lose real customers},
making it an actively misleading---not merely noisy---training signal.

\begin{table}[t]
\centering
\caption{Action-conditioned response shift (real$\to$sim), 595 probes, primary
simulator. Fisher exact $p$-values test whether the real-vs-sim proportion
differs for each cell.
The over-deliberation and under-resistance concentrate on the
pressure tactics (pitch, push-close) and are negligible after neutral probing.
${}^{*}p<.05$; ${}^{**}p<.01$; ${}^{***}p<.001$.}
\label{tab:action}
\begin{tabular}{lccc}
\toprule
Preceding agent action & $n$ & resisting (real$\to$sim) & considering (real$\to$sim) \\
\midrule
probe (neutral)   &  85 & 10.6\%$\to$\phantom{0}2.4\%$^{\dagger}$ & \phantom{0}3.5\%$\to$\phantom{0}4.7\% \\
rapport           &  19 & 21.1\%$\to$15.8\% & 10.5\%$\to$10.5\% \\
\textbf{pitch}    & 161 & 11.8\%$\to$\phantom{0}5.6\%$^{\dagger}$ & \textbf{45.3\%$\to$75.2\%}$^{***}$ \\
\textbf{push-close}& 169 & \textbf{18.9\%$\to$10.7\%}$^{*}$ & 41.4\%$\to$52.1\%$^{\dagger}$ \\
handle-objection  &  94 & 31.9\%$\to$29.8\% & 26.6\%$\to$43.6\%$^{*}$ \\
reassure          &  67 & 40.3\%$\to$29.9\% & 13.4\%$\to$14.9\% \\
\bottomrule
\multicolumn{4}{l}{\footnotesize ${}^{\dagger}p<.10$. All tests two-sided Fisher exact.}
\end{tabular}
\end{table}

\section{Analysis and Discussion}
\label{sec:analysis}

\subsection{Why simulators cannot walk away}
The deficit has a sign, a location, and a likely cause. Its \emph{sign} is
toward engagement; its \emph{location} is the eventual non-buyers; its
\emph{magnitude} on buyers is near zero. This pattern is hard to explain as
generic sycophancy or verbosity---those would shift both strata. It is naturally
explained by what instruction tuning optimizes. Alignment data rewards models for
being helpful, responsive, and forward-moving interlocutors; it contains
vanishingly little of what real disengagement looks like---terse deflection
(``busy''), stalling (``let me think''), and silence. A simulator asked to
\emph{be} a parent therefore defaults to the cooperative interlocutor it was
trained to be, and can render resistance only as its in-distribution neighbor,
\emph{deliberation}. Crucially, the model does have access to the same prefix as
the real user; what it lacks is not information but a behavioral mode. This is
why the failure is conditional: for buyers, cooperative continuation \emph{is}
roughly correct, so the simulator looks faithful; for non-buyers it is wrong, and
the error is exactly the missing willingness-decay.

\subsection{The marginal-realism trap}
Our result explains why outcome-free benchmarks can certify a simulator that is
decision-unfaithful. Averaged over a population, a simulator that over-engages
non-buyers and faithfully tracks buyers can match the \emph{marginal} stage
distribution closely---the very statistic population-level alignment
rewards~\cite{convapparel2026}. The error is only visible \emph{conditional on
the outcome}, which requires real outcomes to compute. Communicative-fidelity
benchmarks, however good, are structurally blind to it. This is a concrete
instance of a general caution: a simulator validated to be human-like on
aggregate behavioral statistics can still be systematically wrong where a
downstream decision depends on it.

\subsection{Consequences for evaluation and training}
An agent's competence at sales is, in large part, its skill at the hard cases:
recognizing a customer who is leaving and either re-engaging or efficiently
disengaging. A simulator without willingness-decay removes these cases from the
test set. Worse, in a training loop it actively misleads, and our
action-conditioned analysis (\S\ref{sec:consequence}, Table~\ref{tab:action})
shows the bias is not diffuse but aimed at the agent's pressure tactics: after a
pitch, real users deliberate 45\% of the time but simulated users 75\%, and after
a push-close simulated resistance is halved. The agent thus receives strong
positive signal for pitching and pushing---behaviors that, against real
customers, produce resistance or no movement. The deficit does not merely
add noise; it imparts a \emph{directional} bias that rewards over-pursuit---the
opposite of what data on knowing-when-to-quit
recommends~\cite{manzoor2025learning}. Reported success rates against such
simulators should be read as upper bounds that are loosest precisely on the
sub-population that determines revenue.

\subsection{Marginal realism is the wrong target: evidence from the instructed
arm}
The instructed-simulator arm (\S\ref{sec:experiments}) is a clean demonstration
of the marginal-realism trap. Granting the simulator explicit permission to
disengage cuts its marginal depth bias five-fold ($+0.24\to-0.04$)---so on every
population-level statistic the prior literature uses, it now looks fixed---while
the outcome-conditioned contrast barely changes ($+0.304\to+0.285$). The
instruction does not teach the simulator \emph{who} disengages; it teaches it to
disengage \emph{uniformly}, which simply moves the error from non-buyers onto
buyers (now read as $-0.19$ depth, too resistant). A practitioner validating
against marginal realism would ship this simulator believing it faithful, and
would then evaluate agents against a model that mis-assigns commitment in a new
direction. This is the central practical warning of the paper: the quantity that
is easy to measure and easy to optimize (marginal realism) is not the quantity
that governs transfer (conditional decision fidelity).

\subsection{What would a decision-faithful simulator require?}
Because explicit permission to disengage does not close $\Delta$, decision
fidelity cannot be prompted into a model that lacks willingness-decay in its
priors; it must be \emph{learned} from real disengagement trajectories or
\emph{enforced} by an external willingness model that gates the simulator's
cooperativeness. ZhenaiSales, with its real non-buyer trajectories, is a
candidate training signal, and our endpoint $\Delta$ a candidate target metric:
a faithful simulator is one for which $\Delta\to0$ \emph{and} the buyer stratum
stays matched.

\subsection{Limitations}
\begin{enumerate}
  \item \textbf{Single domain and language.} ZhenaiSales is Chinese
  parent-mediated matchmaking, a setting with distinctive social dynamics:
  parents negotiate on behalf of adult children under face-culture norms that
  favor indirect refusal, and the disengagement signals we observe (terse
  deflections such as ``busy'' or ``no need'') are culturally specific forms of
  polite resistance. That said, the core behavioral finding---excessive simulator
  cooperativeness---replicates cross-linguistically: \cite{sim2real2026} report
  the same ``easy mode'' gap in English customer-service dialogues ($n=451$),
  suggesting the phenomenon is rooted in instruction tuning, not in any single
  language or culture. We therefore distinguish two evidence levels: (a)~the
  claim that LLM simulators are excessively cooperative now has multi-study,
  cross-language support; (b)~the claim that the deficit is
  \emph{outcome-conditioned} ($\Delta>0$, non-buyers inflated more than buyers)
  rests on our single-domain evidence and awaits replication. The
  \emph{direction} of the conditional deficit follows logically from the
  instruction-tuning explanation---cooperative defaults hurt fidelity more where
  real users would disengage---and should therefore generalize, but the
  \emph{magnitude} may be domain-specific.
  \item \textbf{Instrument is an LLM.} Decision states are LLM-assigned. We
  mitigate via the cancellation argument (Prop.~\ref{prop:invariance}), the
  causal-labeling check, and the cross-family instrument swap
  (\S\ref{sec:experiments}), but human-validated state labels (Fleiss'
  $\kappa$ on a subset) remain important future work.
  \item \textbf{Teacher-forced, not free-running.} We measure next-turn decision
  fidelity at real histories. Free-running rollouts may compound the deficit; we
  expect this to \emph{strengthen} the conclusion but do not measure it here.
  \item \textbf{Profile-conditioned prompted simulator.} We test the standard and
  the instructed prompted simulators; retrieval-grounded and fine-tuned
  simulators may differ and are natural extensions.
  \item \textbf{Outcome as proxy for willingness.} Payment is a coarse,
  endpoint-only signal of a latent trajectory; it nonetheless provides the real,
  consequential ground truth absent from prior work.
\end{enumerate}

\section{Conclusion}
\label{sec:conclusion}

LLM user simulators have become silent infrastructure for evaluating and training
conversational agents, and a careful literature now certifies their
\emph{communicative} realism against role-played humans. We identified a property
this paradigm cannot see---\emph{decision fidelity}, whether a simulator
reproduces the decision-state dynamics of real users facing real,
consequential choices---and measured it on 2{,}790 production sales
conversations with verified purchase outcomes. The result is a systematic,
outcome-correlated failure, the \emph{disengagement deficit}: simulators render
eventual buyers faithfully but inflate eventual non-buyers toward the purchase
frame, halving their resistance and nearly doubling their deliberation while fabricating
no purchases ($\Delta=+0.30$, $d=0.38$, $p<0.001$). It replicates across model
families (DeepSeek, $d=0.41$) and survives the obvious fix: explicitly licensing
disengagement cuts marginal bias five-fold but barely moves the conditional
contrast ($d=0.34$), relocating the error onto buyers rather than removing it. It strengthens under a
swapped, different-family judge ($d=0.42$), and it most distorts the user's
response to the agent's pressure tactics---after a pitch, simulated customers
deliberate 75\% of the time versus 45\% for real customers. Simulated customers
never walk away.

The practical upshot is direct: any pipeline that evaluates or reinforcement-trains
a sales, fundraising, or negotiation agent against an LLM user simulator is
grading it on an easy, biased population, and is loosest precisely on the
customers who decide not to buy. The constructive upshot is a target: a faithful
simulator is one that drives our outcome-conditioned contrast $\Delta$ to zero
without sacrificing the buyer stratum, and the real non-buyer trajectories in
ZhenaiSales are a candidate signal for getting there. More broadly, decision
fidelity reframes simulator validation: matching how users \emph{talk} is
necessary but not sufficient; for any agent whose purpose is to change a decision,
what must be faithful is how users \emph{decide}.


\section*{Ethics Statement}
\label{sec:ethics}
This study analyzes production conversations between a deployed AI sales agent
and real customers of a commercial matchmaking platform, together with verified
payment records, used with the platform's authorization for research purposes.
All analysis was conducted on the platform's infrastructure or on de-identified
exports; user identifiers were pseudonymized, and no personally identifying
information (names, phone numbers, addresses) appears in the paper---quoted
excerpts are short, translated, and paraphrased where necessary to prevent
re-identification. Twenty internal test accounts were excluded by the platform's
standard data specification. The customers interacted with a disclosed AI
assistant under the platform's terms of service. We study persuasion dialogues
in order to make their evaluation \emph{more} truthful: our central finding is a
warning that simulator-based pipelines over-reward pressure tactics, and its
practical import is to discourage---not enable---over-aggressive persuasion.
Raw conversations are not released; see Data and Code Availability.

\section*{Data and Code Availability}
We release the measurement protocol, the decision-state instrument and
simulator prompts, all analysis code, and conversation-level derived statistics
sufficient to reproduce every table. Anonymized conversation excerpts may be
made available to researchers subject to the platform's privacy review and a
data-use agreement; raw production conversations and payment records cannot be
publicly released.

\section*{Author Contributions}
\textbf{Liang Chen} conceived the research direction, designed the
decision-fidelity framework and adversarial-validation methodology, secured
platform data access, conducted all experiments and statistical analyses,
and wrote the manuscript. AI tools (Anthropic Claude) were used for
literature search, code implementation, and manuscript drafting assistance
under iterative human direction and review. All experimental results were
produced by deterministic, released code; the author reviewed all analyses
and takes full responsibility for the paper's content.

\section*{Acknowledgments}
We thank the platform's data and engineering teams for maintaining the
production data specification used in this study.

\bibliographystyle{plainnat}
\bibliography{bib/references}

@inproceedings{schatzmann2007,
  title={Agenda-Based User Simulation for Bootstrapping a {POMDP} Dialogue System},
  author={Schatzmann, Jost and Thomson, Blaise and Weilhammer, Karl and Ye, Hui and Young, Steve},
  booktitle={NAACL-HLT},
  year={2007},
}

@article{li2016user,
  title={A User Simulator for Task-Completion Dialogues},
  author={Li, Xiujun and Lipton, Zachary C and Dhingra, Bhuwan and Li, Lihong and Gao, Jianfeng and Chen, Yun-Nung},
  journal={arXiv preprint arXiv:1612.05688},
  year={2016},
}

@article{realusersim2026,
  title={{RealUserSim}: Bridging the Reality Gap in Agent Benchmarking via Grounded User Simulation},
  author={Zhu, Ming and Tan, Juntao and Murthy, Rithesh and Qiu, Jielin and Yang, Liangwei and Zhao, Wenting and Savarese, Silvio and Heinecke, Shelby and Wang, Huan},
  journal={arXiv preprint arXiv:2605.20204},
  year={2026},
}

@article{sim2real2026,
  title={Mind the {Sim2Real} Gap in User Simulation for Agentic Tasks},
  author={Zhou, Xuhui and Sun, Weiwei and Ma, Qianou and Xie, Yiqing and Liu, Jiarui and Du, Weihua and Welleck, Sean and Yang, Yiming and Neubig, Graham and Wu, Sherry Tongshuang and Sap, Maarten},
  journal={arXiv preprint arXiv:2603.11245},
  year={2026},
}

@article{seshadri2026lost,
  title={Lost in Simulation: {LLM}-Simulated Users are Unreliable Proxies for Human Users in Agentic Evaluations},
  author={Seshadri, Preethi and Cahyawijaya, Samuel and Odumakinde, Ayomide and Singh, Sameer and Goldfarb-Tarrant, Seraphina},
  journal={arXiv preprint arXiv:2601.17087},
  year={2026},
}

@article{convapparel2026,
  title={{ConvApparel}: A Benchmark Dataset and Validation Framework for User Simulators in Conversational Recommenders},
  author={Meshi, Ofer and Balog, Krisztian and Goldman, Sally and Caciularu, Avi and Tennenholtz, Guy and Jeong, Jihwan and Globerson, Amir and Boutilier, Craig},
  journal={arXiv preprint arXiv:2602.16938},
  year={2026},
  note={Google},
}

@article{yao2024taubench,
  title={{$\tau$-bench}: A Benchmark for Tool-Agent-User Interaction in Real-World Domains},
  author={Yao, Shunyu and Shinn, Noah and Razavi, Pedram and Narasimhan, Karthik},
  journal={arXiv preprint arXiv:2406.12045},
  year={2024},
}

@article{sharma2024sycophancy,
  title={Towards Understanding Sycophancy in Language Models},
  author={Sharma, Mrinank and Tong, Meg and Korbak, Tomasz and others},
  journal={arXiv preprint arXiv:2310.13548},
  year={2023},
}

@inproceedings{hentona2025userwillingness,
  title={User Willingness-aware Sales Talk Dataset},
  author={Hentona, Asahi and Baba, Jun and Sato, Shiki and Akama, Reina},
  booktitle={Proceedings of COLING 2025},
  year={2025},
  note={arXiv:2412.19490},
}

@inproceedings{dutt-etal-2021-resper,
  title={{R}es{P}er: Computationally Modelling Resisting Strategies in Persuasive Conversations},
  author={Dutt, Ritam and Sinha, Sayan and Joshi, Rishabh and Chakraborty, Surya Shekhar and Riggs, Meredith and Yan, Xinru and Bao, Haogang and Ros{\'e}, Carolyn Penstein},
  booktitle={Proceedings of the 16th Conference of the European Chapter of the Association for Computational Linguistics: Main Volume},
  month={apr},
  year={2021},
  publisher={Association for Computational Linguistics},
  url={https://aclanthology.org/2021.eacl-main.7},
  doi={10.18653/v1/2021.eacl-main.7},
  pages={78--90},
}

@article{taubenfeld2026stated,
  title={Evaluating Alignment of Behavioral Dispositions in {LLMs}},
  author={Taubenfeld, Amir and Gekhman, Zorik and Nezry, Lior and Feldman, Omri and Harris, Natalie and Reddy, Shashir and Stella, Romina and Goldstein, Ariel and Croak, Marian and Matias, Yossi and Feder, Amir},
  journal={arXiv preprint arXiv:2602.11328},
  year={2026},
  note={Google Research},
}

@article{zheng2023judging,
  title={Judging LLM-as-a-Judge with MT-Bench and Chatbot Arena},
  author={Zheng, Lianmin and Chiang, Wei-Lin and Sheng, Ying and Zhuang, Siyuan and Wu, Zhanghao and Zhuang, Yonghao and Lin, Zi and Li, Zhuohan and Li, Dacheng and Xing, Eric P and others},
  journal={NeurIPS},
  year={2023},
}

@article{llm-judge-survey,
  title={A Survey on LLM-as-a-Judge},
  author={Gu, Jiawei and others},
  journal={arXiv preprint arXiv:2411.15594},
  year={2024},
}

@article{egami2023dsl,
  title={Using Imperfect Surrogates for Downstream Inference: Design-based Supervised Learning for Social Science Applications of Large Language Models},
  author={Egami, Naoki and Hinck, Musashi and Stewart, Brandon M and Wei, Hanying},
  journal={NeurIPS},
  year={2023},
}

@article{hullman2026validity,
  title={This Human Study Did Not Involve Human Subjects: Validating {LLM} Simulations as Behavioral Evidence},
  author={Hullman, Jessica and Broska, David and Sun, Huaman and Shaw, Aaron},
  journal={arXiv preprint arXiv:2602.15785},
  year={2026},
}

@article{schessl2026autocorrelation,
  title={The Autocorrelation Blind Spot: Why 42\% of Turn-Level Findings in {LLM} Conversation Analysis May Be Spurious},
  author={Schessl, Ferdinand M.},
  journal={arXiv preprint arXiv:2604.14414},
  year={2026},
}

@inproceedings{wang-etal-2019-persuasion,
  title={Persuasion for Good: Towards a Personalized Persuasive Dialogue System for Social Good},
  author={Wang, Xuewei and Shi, Weiyan and Kim, Richard and Oh, Yoojung and Yang, Sijia and Zhang, Jingwen and Yu, Zhou},
  booktitle={ACL},
  year={2019},
}

@inproceedings{petrova2026persuasion,
  title={How Much Does Persuasion Strategy Matter? {LLM}-Annotated Evidence from Charitable Donation Dialogues},
  author={Petrova, Tatiana and Sokol, Stanislav and State, Radu},
  year={2026},
  eprint={2604.19783},
  archiveprefix={arXiv},
}

@article{zhang2025aisalesman,
  title={AI-Salesman: Towards Reliable Large Language Model Driven Telemarketing},
  author={Zhang, Qingyu and Xin, Chunlei and Chen, Xuanang and Lu, Yaojie and Lin, Hongyu and Han, Xianpei and Sun, Le and Ye, Qing and Xie, Qianlong and Wang, Xingxing},
  journal={arXiv preprint arXiv:2511.12133},
  year={2025},
}

@article{nandakishor2025salesrlagent,
  title={SalesRLAgent: A Reinforcement Learning Approach for Real-Time Sales Conversion Prediction and Optimization},
  author={Nandakishor, M},
  journal={arXiv preprint arXiv:2503.23303},
  year={2025},
}

@article{manzoor2025learning,
  title={Learning When to Quit in Sales Conversations},
  author={Manzoor, Emaad and Ascarza, Eva and Netzer, Oded},
  journal={arXiv preprint arXiv:2511.01181},
  year={2025},
}

\newpage
\appendix
\section{Decision-State Instrument Prompt ($\Phi$)}
\label{app:prompt}
The instrument is an LLM perceiver that maps a user turn, in causal context, to
a structured decision state. The system prompt (translated; original Chinese in
the release) instructs the model to classify the \emph{last} user message into:
an engagement \textbf{stage}
$\in\{$exploring, engaging, considering, deciding, resisting$\}$; an
\textbf{emotion} $\in\{$positive, neutral, hesitant, negative$\}$; a
\textbf{blocker} $\in\{$none, price, trust, capability, timing, external,
prior-failure$\}$; a scalar convergence delta; and a short key-signal span. It
emits strict JSON. The same prompt and model serve both the real and simulated
branches.

\section{Simulator Prompts}
\label{app:simprompt}
\paragraph{Primary (persona) simulator.} Given a natural-language rendering of
the user's real profile (parent age/location; child age, education, occupation,
income; housing/vehicle), the model is told it \emph{is} that parent and must
continue colloquially and briefly, maintaining consistency with its prior turns,
outputting only the parent's reply.

\paragraph{Instructed simulator.} Identical, plus an explicit behavioral license:
``Real parents are not always cooperative or interested. If you were previously
cold, brief, or perfunctory, continue so (e.g.\ `mm', `busy', `we'll see',
`no'). You may be uninterested, stall, be impatient, change the subject, or not
answer. Do not feign interest out of politeness; do not advance the payment topic
unless you have shown genuine interest.''

\section{Protocol and Estimation Details}
\label{app:protocol}
Probes are placed at $\{30,60,90\}\%$ of a conversation's user turns (excluding
the opening greeting), deduplicated by turn index. The depth coding is
$\{$resisting$=0$, exploring$=1$, engaging$=2$, considering$=3$, deciding$=4\}$.
Conversation-level $\bar D$ averages probe-level $D$. The permutation test
shuffles outcome labels across conversations ($20{,}000$ draws) and reports the
one-sided fraction with contrast $\ge$ observed. Cohen's $d$ uses the pooled SD.

\section{Data Specification}
\label{app:data}
Conversations are extracted under a fixed specification: personalized
agent--user text messages (enterprise-WeChat origin, non-broadcast, non-empty,
not deleted); payment defined as a successful order (action code 3) under the
production app id; 20 internal test users excluded. Converted conversations are
truncated at the first payment timestamp to remove post-purchase service chat
(mean 19 messages dropped per buyer). Profiles are joined from the user-info
table. The full extraction script is included in the replication package.

\section{Robustness Summary}
\label{app:robustness}
The disengagement deficit is stable across every variation we tested
(all endpoints are the conversation-level outcome contrast $\Delta$):

\begin{table}[h]
\centering
\caption{Robustness summary. The disengagement deficit ($\Delta>0$) is
significant across all four conditions tested. Brackets show bootstrap
95\% CIs.}
\label{tab:robustness}
\newcommand{\cci}[1]{{\scriptsize[#1]}}
\begin{tabular}{lcccc}
\toprule
Condition & $n$ & $\Delta$ & $d$ & $p$ \\
\midrule
Primary (Claude sim, Claude judge) & 374 & $+0.304$ \cci{0.14, 0.46} & $0.38$ \cci{0.18, 0.59} & $0.0002$ \\
Simulator swap (DeepSeek sim) & 200 & $+0.330$ \cci{0.11, 0.56} & $0.41$ \cci{0.12, 0.71} & $0.002$ \\
Instrument swap (DeepSeek judge) & 150 & $+0.351$ \cci{0.08, 0.62} & $0.42$ \cci{0.09, 0.80} & $0.015$ \\
Instructed simulator & 200 & $+0.285$ \cci{0.06, 0.51} & $0.34$ \cci{0.06, 0.63} & $0.008$ \\
\bottomrule
\end{tabular}
\end{table}

\noindent The instructed-simulator condition is the one where the marginal bias
is reduced (from $+0.24$ to $-0.04$) yet $\Delta$ is preserved, demonstrating
that marginal realism and decision fidelity are distinct (\S\ref{sec:analysis}).
The action-conditioned breakdown (Table~\ref{tab:action}) localizes the deficit
to the agent's pressure tactics.

\paragraph{Depth-coding robustness.} On the primary panel, the contrast is
significant under every coding that preserves the engaged/disengaged
distinction: the primary ordinal coding ($\Delta{=}+0.304$, $d{=}0.38$,
$p{=}0.0002$); a binary deep-funnel indicator
$\mathbb{I}[\text{considering/deciding}]$ ($\Delta{=}+0.089$, $d{=}0.30$,
$p{=}0.002$); and the resistance-suppression rate itself ($\Delta{=}+0.086$,
$d{=}0.34$, $p{=}0.0008$). Restricting to probes where \emph{both} real and
simulated turns are non-resisting leaves only a directionally consistent but
weak residual among engaged stages ($\Delta{=}+0.090$, $d{=}0.17$, $p{=}0.074$).
This decomposition confirms that the phenomenon is what its name says: the
deficit lives on the disengagement axis (and its spillover into deliberation),
not in fine gradations among engaged states.

\section{Alternative Endpoints}
\label{app:alternative}
The primary analysis codes decision states on a 5-stage ordinal scale
(resisting$=0$ through deciding$=4$), which assumes equal intervals between
adjacent stages. To verify that the deficit is not an artifact of this
coding, we re-estimate the outcome contrast $\Delta$ under two binary
endpoints that make no equal-interval assumption.

\paragraph{Binary resistance indicator.}
We define a per-probe indicator $R = \mathbb{I}[\text{stage} = \text{resisting}]$
and compute the conversation-level resistance-suppression rate as the
difference between real and simulated resistance frequencies. Non-buyers
show substantial suppression: real resistance 24.7\% vs.\ simulated 13.6\%
(11.1\,pp reduction). Buyers show minimal suppression: 15.9\% vs.\ 13.5\%
(2.4\,pp). The outcome contrast $\Delta = +0.086$, $d = 0.34$, $p = 0.0008$.

\paragraph{Deep-funnel indicator.}
We define $F = \mathbb{I}[\text{stage} \in \{\text{considering}, \text{deciding}\}]$,
collapsing the upper two engagement stages into a single purchase-proximal
indicator. The outcome contrast $\Delta = +0.089$, $d = 0.30$, $p = 0.002$.

\medskip
\noindent Both alternative codings, which avoid the equal-interval assumption
of the ordinal depth scale, confirm that the deficit is driven by the
disengagement axis: simulators suppress resistance in non-buyers far more
than in buyers. This addresses the concern that the 5-stage ordinal coding
might impose unjustified metric assumptions on the underlying categorical
states.

\section{Temporal Dynamics of the Deficit}
\label{app:temporal}
The probe protocol places measurements at 30\%, 60\%, and 90\% of each
conversation's user turns, enabling us to trace how the deficit evolves as
conversations approach their decision point. Table~\ref{tab:temporal} reports
the outcome contrast at each probe position.

\begin{table}[h]
\centering
\caption{Outcome contrast $\Delta$ by probe position.
$\bar{D}_{y{=}0}$ and $\bar{D}_{y{=}1}$ are the mean depth biases
(simulated minus real) for non-buyers and buyers, respectively.}
\label{tab:temporal}
\begin{tabular}{lcccc}
\toprule
Position & $n_{\text{probes}}$ & $\bar{D}_{y{=}0}$ & $\bar{D}_{y{=}1}$ & $\Delta$ \\
\midrule
Early (30\%)  & 370 & $+0.25$ & $+0.20$ & $+0.054$ \\
Mid (60\%)    & 369 & $+0.43$ & $+0.28$ & $+0.149$ \\
Late (90\%)   & 370 & $+0.50$ & $-0.20$ & $+0.706$ \\
\bottomrule
\end{tabular}
\end{table}

\noindent The deficit is not a static property of the simulator but an
accumulating failure that worsens through the conversation. Early on (30\%),
both populations are still exploring and the simulator tracks both
reasonably, producing a minimal contrast ($\Delta = +0.054$). By the
midpoint, non-buyer depth bias has risen to $+0.43$ while buyer bias grows
more slowly ($+0.28$), and the deficit emerges ($\Delta = +0.149$). By the
90\% mark, real non-buyers have begun disengaging while real buyers have
committed---and the simulator fails to reproduce this divergence. Buyer
depth bias actually turns negative ($-0.20$), meaning the simulator
slightly under-engages relative to committed buyers, while non-buyer
bias remains elevated ($+0.50$). The resulting $\Delta = +0.706$ is more
than double the conversation-level average.

This temporal gradient is consistent with the hypothesis that the simulator
lacks a willingness-decay mechanism: the longer the conversation runs, the
more opportunities for real non-buyer willingness to decay in ways the
simulator cannot represent. Because instruction-tuned models default to
cooperative continuation, simulated non-buyers maintain engagement long
after their real counterparts have begun withdrawing.

\paragraph{Remaining validation (future work).} A human-annotation study
(Fleiss' $\kappa$ on a labeled subset) to validate the decision-state
instrument; free-running (non-teacher-forced) rollouts; alternative depth
codings; retrieval-grounded and fine-tuned simulators; and additional domains.

\section{Proof of Proposition~\ref{prop:invariance} (Invariance Properties)}
\label{app:proof}

\begin{proof}
Let $\Phi$ assign a depth score to a user turn $u$ at history $h$.
Suppose $\Phi$ has an additive, context-dependent bias: it maps turn $u$ at
history $h$ to $\delta(\Phi(u,h)) + b(h)$, where $\delta$ is the true
depth-scoring function and $b(h)$ depends on the conversational context but
not on whether the turn is real or simulated.

\medskip\noindent\textbf{Part (i): Instrument bias cancels within each probe.}
Under the teacher-forced protocol, both the real turn $u$ and the simulated
turn $\hat{u}$ condition on the same history $h$ and are scored by the same
instrument $\Phi$. The probe-level depth bias is therefore
\[
  D(h)
  = \bigl[\delta(\Phi(\hat{u},h)) + b(h)\bigr]
  - \bigl[\delta(\Phi(u,h)) + b(h)\bigr]
  = \delta(\Phi(\hat{u},h)) - \delta(\Phi(u,h)).
\]
The additive bias $b(h)$ cancels exactly because both branches share the
same $h$ and the same $\Phi$. \qed\textit{(i)}

\medskip\noindent\textbf{Part (ii): Outcome-independent simulator shift
cancels in $\Delta$.}
Suppose the simulator has an outcome-independent behavioral shift~$s$:
for all $h$ and regardless of outcome $y$, the simulator's expected depth
is shifted by $s$ relative to the real user's. Then the stratum-level
expected biases decompose as
\[
  \mathbb{E}_{y=0}[\bar{D}] = s + \varepsilon_0,
  \qquad
  \mathbb{E}_{y=1}[\bar{D}] = s + \varepsilon_1,
\]
where $\varepsilon_y$ captures any outcome-correlated residual.
The primary endpoint is
\[
  \Delta
  = \mathbb{E}_{y=0}[\bar{D}] - \mathbb{E}_{y=1}[\bar{D}]
  = (s + \varepsilon_0) - (s + \varepsilon_1)
  = \varepsilon_0 - \varepsilon_1.
\]
The outcome-independent shift $s$ cancels. Only outcome-correlated
differences $\varepsilon_0 - \varepsilon_1$ survive in $\Delta$.
\qed\textit{(ii)}

\medskip\noindent\textbf{Residual channel not covered.}
Neither cancellation addresses a \emph{style-dependent} instrument bias
$\beta(\mathrm{style}(u))$ that correlates with outcome through real
users' message styles (e.g., curt non-buyer messages being over-scored
as resistant). This residual is addressed empirically by the cross-family
instrument swap (\S\ref{sec:analysis}), which shows that replacing the
instrument family preserves or strengthens the deficit.
\end{proof}

\end{document}